\newtheorem{proposition}{Property}
\newtheorem{lemmas}{Remark}
\newcommand*{\QEDA}{\hfill\ensuremath{\square}}
\title{\LARGE \bf
Essential Properties of Numerical Integration for Time-optimal Trajectory Planning Along a Specified Path
}
\author{Peiyao Shen, Xuebo Zhang and Yongchun Fang
\thanks{This work is supported in part by National Natural Science Foundation
of China under Grant 61573195 and Grant 613205017.}
\thanks{The authors are with the Institute of Robotics and Automatic Information System, Nankai University, Tianjin 300071, China,
and also with the Tianjin Key Laboratory of Intelligent Robotics, Nankai University, Tianjin 300071, China
        {\tt\small (e-mail: zhangxuebo@nankai.edu.cn)}}%
}
\begin{document}

\maketitle
\thispagestyle{empty}
\pagestyle{empty}

\begin{abstract}
This letter summarizes \emph{some known properties} and also presents \emph{several new properties} of the Numerical Integration (\emph{NI}) method for time-optimal trajectory planning along a specified path. The contribution is that \emph{rigorous mathematical proofs} of these properties are presented, most of which have not been reported in existing literatures. We first give some properties regarding switch points and accelerating/decelerating curves of the \emph{NI} method. Then, for the fact that when kinematic constraints are considered, the original version of \emph{NI} which only considers torque constraints may result in failure of trajectory planning, we give the concrete failure conditions with rigorous mathematical proof. Accordingly, a failure detection algorithm is given in a `run-and-test' manner. Some simulation results on a unicycle vehicle are provided to verify those presented properties. Note that though those \emph{known properties} are not discovered first, their mathematical proofs are given first in this letter. The detailed proofs make the theory of \emph{NI} more complete and help interested readers to gain a thorough understanding of the method.\\
\end{abstract}

\begin{keywords}
Time-optimal trajectory planning, Numerical Integration, Properties with rigorous proofs.
\end{keywords}

\section{Introduce}

Due to low computational complexity, decoupled planning \cite{offline,OWMR_Kim,No_sm2}  becomes a popular motion planning method, which consists of two stages.
In the first stage, path planning methods are used to generate a geometric path with high level constraints including obstacle avoidance, curvature, and so on.
In the second stage, trajectory planning, which aims to assign a motion time profile to the specified path, could be then simplified as a planning problem in two-dimensional path parametrization space ($s, \dot s$), with $s$ and $\dot s$ being the path coordinate and path velocity respectively.
To improve  working efficiency, several time-optimal methods have been reported for the trajectory planning stage along a specified path, including \emph{Dynamic Programming} \cite{Dp1,Dp2,Dp3}, \emph{Convex Optimization} \cite{Convex1,Convex2,Convex3} and \emph{NI} \cite{NI_Shin,NI_Bobrow,NI_Improve,NI_Quang,NI_Pfeiffer,NI_Shiller,NI_Quang2,NI_Beh,No_sm1,NI_spacecraft,NI_human,NI_Zlajpah,NI_Lamiraux}.
{In addition, the work in \cite{Kal} proposes a real-time trajectory generation approach for omni-directional vehicles by constrained dynamic inversion.}

In this letter, we will focus on the \emph{NI} method, since its computational efficiency is shown in \cite{NI_Quang} to be better than other methods. The original version of \emph{NI} is proposed at almost the same time in the works \cite{NI_Shin,NI_Bobrow}, which aims to give a time-optimal trajectory along a specified path in the presence of only torque constraints for manipulators. Based on Pontryagin Maximum Principle, \emph{NI} possesses a bang-bang structure of torque inputs, thus, the core of \emph{NI} is the computation of switch points. The accelerating and decelerating curves, integrated from those switch points, constitute the time-optimal trajectory.
The work in \cite{NI_Improve} presents three types of switch points: tangent, discontinuity and zero-inertia points.
The detection methods for zero-inertia switch points are presented in \cite{NI_Quang,NI_Pfeiffer,NI_Shiller}.
Based on previous works, Pham \cite{NI_Quang} provides a fast, robust and open-source implementation for \emph{NI} in C++/Python, which is subsequently extended to the case of redundantly-actuated systems in the work \cite{NI_Quang2}.
In addition to manipulators \cite{NI_Beh, No_sm1}, \emph{NI} is also applied to spacecrafts \cite{NI_spacecraft}  and humanoid robots \cite{NI_human}.
In real applications, in addition to torque constraints, velocity constraints (bounded actuator velocity and path velocity) should also be considered.
Under these constraints, the works in \cite{NI_Zlajpah, NI_Lamiraux} indicate that the original \emph{NI} method with only torque constraints \cite{NI_Shin} can not be directly applied. {Considering velocity and acceleration constraints, the work \cite{No_sm1} focuses on detecting tangent, discontinuity and zero-inertia switch points on the speed limit curve decided by velocity constraints. However, rigorous proofs have not been reported to expose the conditions and underlying reasons of failure cases in aforementioned literatures.}


In this brief, we summarize some known properties and amend corresponding mathematical proofs which have not been reported in existing literatures; in addition, some new properties are excavated and proven rigorously. Some properties indicate the evolution of accelerating/decelerating curves and their intersection points. And another important property indicates that, when kinematic constraints are considered, the original version of \emph{NI} which only considers torque constraints, may result in failure of trajectory planning tasks. For this property, we first give the concrete failure conditions and detailed proofs. Accordingly, the failure detection algorithm is given in a `run-and-test' manner. Simulation results on a unicycle vehicle are provided to verify these properties.

{
The main contribution of this letter is the rigorous and detailed proofs for all presented properties:
\begin{enumerate}
  \item For \emph{Properties \ref{p2}-\ref{ps}} reported in \cite{AVP}, we first give their mathematical proofs in Section III-A.
  \item Some new properties are presented and proven, including \emph{Property \ref{p3}} in Section III-B and \emph{Properties \ref{t1}-\ref{t2}} in Section III-C.
\end{enumerate}
}

{These properties and proofs make the theory of \emph{NI} more complete and  help interested readers to gain an thorough understanding of the \emph{NI} method.}

The remainder of this letter is divided into four sections. Section II introduces notations and procedures of the \emph{NI} method.
Section III presents essential properties of the \emph{NI} method, and provides rigorous mathematical proofs.
In Section IV, simulation results are provided to verify the properties.
Finally, Section V gives some conclusions.

\section{Numerical Integration}

In this section, we briefly introduce the notations and procedures of the \emph{NI} method in \cite{NI_Shin,NI_Bobrow}.
{For the time-optimal path-constrained trajectory planning problem, based on Pontryagin Maximum Principle, the original \emph{NI} method \cite{NI_Shin,NI_Bobrow} uses a bang-bang structure of torque input to generate a time-optimal trajectory. First, torque constraints are converted to path acceleration constraints along the given path. Then, switch points of path acceleration are found. Finally, a time-optimal trajectory is integrated numerically with maximum and minimum path acceleration. A stable and open-source implementation of \emph{NI} method can be found in [13], and kinematic constraints are handled in the implementation with the method proposed in \cite{NI_Zlajpah}. Yet, in the presence of kinematic constraints, the rigorous proof showing conditions of failure of the original \emph{NI} with only torque constraints, has not been reported.}

\subsection{{Time-optimal Path-constrained Trajectory Planning}}

{The time-optimal path-constrained trajectory planning problem is to find a time-optimal velocity profile along the given path for a robot under various constraints such as torque constraints, velocity constraints, and so on. Note that the direction of the path velocity is supposed to be tangent to the given path.}

\subsection{Notations}

{In order to explain various notations clearer, we will refer to Fig. \ref{notation} in the following.}

\begin{figure}[thpb]
      \centering
      \includegraphics[scale=0.6]{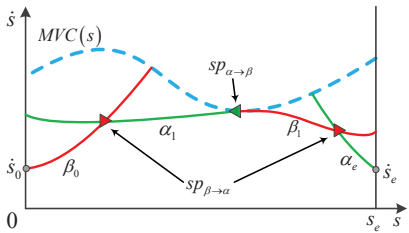}
      \caption{{Red solid curves $\beta_0,\beta_1$ represent accelerating $\beta\text{-}profiles$. Green solid curves $\alpha_1,\alpha_e$ represent decelerating $\alpha\text{-}profiles$. Red $\rhd$ and green $\lhd$  denote the points
$sp_{\beta\rightarrow\alpha}$ and $sp_{\alpha\rightarrow\beta}$, respectively. The scalars $\dot s_0, \dot s_e$ are respectively the starting and terminal path velocity at the endpoints of the given path. The $s_e$ is the total length of the given path.} }
      \label{notation}
\end{figure}

$s \ $: Path coordinate along a specified path.


$\dot s \ $: Path velocity along a specified path.

$\ddot s \ $: Path acceleration along a specified path.



$\bm A(s)\ddot s + \bm B(s)\dot s^2 + \bm C(s) \leq \bm 0$ \cite{NI_Shin}:
The inequality constraint along a specified path for $s, \dot s, \ddot s$, which is derived from torque constraints. {As an example, for an $n$-dof manipulator,
in order to guarantee torque constraints in a path-constrained trajectory planning task, the inequality constraint \cite{NI_Shin}}
\begin{equation}\label{ineq}
M(\bm \xi)\bm{\ddot \xi} + \bm{\dot \xi}^\mathrm{T}P(\bm \xi) \bm{\dot \xi} +\bm Q(\bm \xi) \leq \bm 0
\end{equation}
should hold, where the state $\bm \xi$ is an $n$-dimensional vector, $M$ is an $m\times n$ matrix, $P$ is an $n\times m\times n$ tensor and $\bm Q$ is an $m$-dimensional vector. Along the specified path, the robot state and its differentials are
\begin{equation}\label{state}
\bm \xi = \bm \xi(s), \ \bm{\dot \xi} = \bm \xi_s\dot s, \ \bm{\ddot \xi}=\bm \xi_{ss}\dot{s}^2 + \bm \xi_s\ddot s
\end{equation}
where $\bm \xi_s = d\bm \xi / ds$, $\bm \xi_{ss} = d\bm \xi_s / ds$. After substituting (\ref{state}) into (\ref{ineq}), we obtain that
\begin{equation}\label{abc}
\bm A(s)\ddot s + \bm B(s)\dot s^2 + \bm C(s) \leq \bm 0,
\end{equation}
with
\begin{equation} \nonumber
\begin{split}
& \bm A(s)=M(\bm \xi(s))\bm \xi_s(s), \\
& \bm B(s) = M(\bm \xi(s))\bm \xi_{ss}(s) + \bm \xi_s(s)^\mathrm{T}P(\bm \xi(s))\bm \xi_s(s),\\
& \bm C(s) = \bm Q(\bm \xi(s)).
\end{split}
\end{equation}

$\alpha(s, \dot s), \beta(s, \dot s):$ In order to
guarantee torque constraints, the scalars $s,\dot s,\ddot s$ should satisfy the inequality (\ref{abc}). Therefore, given the path coordinate $s$ and path velocity $\dot s$, the path acceleration $\ddot s$ satisfies the following inequality
\begin{equation}\label{alpbet}
\alpha(s,\dot s)\leq \ddot s \leq \beta(s,\dot s),
\end{equation}
where the minimum path acceleration $\alpha(s,\dot s)$ and maximum path acceleration $\beta(s, \dot s)$ are computed as
\begin{align}
& \alpha(s,\dot s) = \mathrm{max}\{\alpha_i|\alpha_i = \dfrac{-B_i(s)\dot s^2 - C_i(s)}{A_i(s)}, A_i(s)<0\},\label{alp} \\
& \beta(s,\dot s) = \mathrm{min}\{\beta_i|\beta_i = \dfrac{-B_i(s)\dot s^2 - C_i(s)}{A_i(s)}, A_i(s)>0\},\label{bet}
\end{align}
{wherein the integer $i\in[1,m]$, with $m$ being the dimension of the vector $\bm A$}. \textcolor{blue}{Please see the detailed description in \cite{NI_Bobrow}}.



$MVC$: The maximum velocity curve in the plane $(s,\dot s)$ is represented as
\begin{equation} \label{MVC}
MVC(s)=\mathrm{min}\{\dot s \geq 0 |\alpha(s,\dot s) = \beta(s, \dot s) \}, \  s\in[0,s_e].
\end{equation}
{For instance, the cyan dash curve in Fig. \ref{notation} is $MVC$.} If the robot state is on the $MVC$, there exists at
least one saturated actuator torque.

$AR$: The admissible region, in the plane $(s,\dot s)$, is enclosed by the curve $MVC$ and the lines $\dot s=0,s=0,s=s_e$. Within the $AR$ except for the boundary $MVC$, all actuator
torques are between lower and upper bounds ($\alpha(s,\dot s) < \beta(s,\dot s)$).

$\alpha\text{-}profile$: The decelerating curve, in the plane $(s,\dot s)$, is integrated backward with minimum acceleration $\alpha(s,\dot s)$  in (\ref{alp}). The slope $k_{\alpha}$ is computed as $k_\alpha = d\dot s / ds = \alpha(s, \dot s) / \dot s$.

$\beta\text{-}profile$: The accelerating curve, in the plane $(s,\dot s)$, is integrated forward with maximum acceleration $\beta(s, \dot s)$  in (\ref{bet}). The slope $k_{\beta}$ is computed as $k_\beta = d\dot s / ds = \beta(s, \dot s) / \dot s$.

$sp_{\alpha\rightarrow\beta}$ \cite{NI_Improve}: {Switch points from decelerating to accelerating curves, such as the green $\lhd$ in Fig. \ref{notation}.
They are on the $MVC$ curve, and there are three different types: tangent, discontinuity or zero-inertia points.} At tangent points, the slope $k_{mvc} = d \dot s/ds$ of $MVC$ is equal to $k_{\alpha}(=k_{\beta})$. At discontinuity points, $MVC$ is discontinuous. At zero-inertia points, at least one $A_i(s) = 0$ holds for the corresponding path coordinate $s$.

$sp_{\beta\rightarrow\alpha}$: {Switch points from accelerating to decelerating curves, such as the red $\rhd$ in Fig. \ref{notation}.}

\subsection{Numerical Integration Algorithm}

Procedures of the original version of \emph{NI} \cite{NI_Shin} which only considers torque constraints are given as follows:

\begin{description}
  \item[\textbf{NI-1.}]   In the plane $(s, \dot s)$, starting from $(s=0,\dot s = \dot s_0)$, the accelerating curve $\beta\text{-}profile$ is integrated forward  with maximum path acceleration $\beta(s,\dot s)$ until one of the following cases occurs:
                  \begin{itemize}
                  \item the curve $MVC$ is hit, and go to \textbf{NI-2};
                  \item the line $\dot s = 0$ is hit, and output that this path is not traversable;
                  \item the line $s = s_e$ is hit, and go to \textbf{NI-3}.
                  \end{itemize}

  \item[\textbf{NI-2.}] From the hitting point, searching forward along $MVC$, the first tangent, discontinuity or zero-inertia
                 point found is $sp_{\alpha\rightarrow\beta}$.
                 \begin{itemize}
                   \item If $sp_{\alpha\rightarrow\beta}$ is detected, from the switch point, an $\alpha\text{-}profile$ is integrated backward with $\alpha(s,\dot s)$ until it intersects the generated $\beta\text{-}profile$ in \textbf{NI-1, NI-2} at a point $sp_{\beta\rightarrow\alpha}$, and
                  one new $\beta\text{-}profile$ is integrated forward as \textbf{NI-1}.
                   \item If $sp_{\alpha\rightarrow\beta}$ is not detected, go to \textbf{NI-3}.
                 \end{itemize}

  \item[\textbf{NI-3.}]  Starting from $(s=s_e,\dot s = \dot s_e)$, the decelerating curve $\alpha\text{-}profile$ is integrated backward with minimum path acceleration $\alpha(s,\dot s)$ until it intersects the generated $\beta\text{-}profile$ in \textbf{NI-1, NI-2} at a point $sp_{\beta\rightarrow\alpha}$. Finally, output a time-optimal trajectory consisting of those generated accelerating and decelerating curves in \textbf{NI-1, NI-2} and \textbf{NI-3}.
\end{description}

{Note that chattering or vibration is usually caused by high-frequent switching of acceleration. Fortunately, the work in \cite{NI_Shin} has indicated that switch points are finite for the \emph{NI} method, which generally does not cause severe chattering phenomenons.}

\section{Properties}

In this section, essential properties of \emph{NI} are provided with rigorous mathematical proofs. Note that \emph{Property \ref{p1}} has been presented and proven in \cite{AVP}; \emph{Properties \ref{p2}-\ref{ps}} have been reported in \cite{AVP}, and we will amend their mathematical proofs; \emph{Properties \ref{p3}-\ref{t1}} are first exposed in this letter with rigorous proofs; in the presence of kinematic constraints, the failure of the original \emph{NI} with only torque constraints is reported in \cite{NI_Zlajpah,NI_Lamiraux}, and we will give the mathematical conditions and underlying reasons of failure cases in \emph{Property \ref{t2}}. {See the proofs of \emph{Properties \ref{p2}-\ref{t1}} in Appendix A-D, respectively.}

\subsection{{Property of $\alpha\text{-}profiles$ and $\beta\text{-}profiles$}}

In the admissible region except for the boundary $MVC$ ($AREM$), the inequality $\alpha(s,\dot s) < \beta(s,\dot s)$ holds, which indicates $k_{\alpha} < k_{\beta}$ for each point. Three known properties are summarized in the following, and we will give rigorous mathematical proofs for \emph{Properties \ref{p2}-\ref{ps}}, which have not been reported in existing literatures.

\begin{proposition}\label{p1}
In the $AREM$ region, any two $\alpha\text{-}profiles$ never intersect with each other. Neither do $\beta\text{-}profiles$. (Please see the proof in \cite{AVP}.)
\end{proposition}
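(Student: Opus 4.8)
The plan is to recognize an $\alpha\text{-}profile$ as an integral curve of the scalar first-order ODE
\[
\frac{d\dot s}{ds}=F_\alpha(s,\dot s):=\frac{\alpha(s,\dot s)}{\dot s},
\]
and, symmetrically, a $\beta\text{-}profile$ as an integral curve of $\frac{d\dot s}{ds}=F_\beta(s,\dot s):=\beta(s,\dot s)/\dot s$, and then to deduce the non-intersection from uniqueness of solutions: if two $\alpha\text{-}profiles$ passed through a common point $(s^\ast,\dot s^\ast)\in AREM$, they would be two solutions of the same initial value problem, hence would coincide on their common interval of definition, so they are really one and the same $\alpha\text{-}profile$, contradicting the assumption that they are two distinct curves. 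The claim for $\beta\text{-}profiles$ is the mirror image, obtained by replacing $\alpha$ and the index set $\{i:A_i(s)<0\}$ with $\beta$ and $\{i:A_i(s)>0\}$ throughout; so I would only write the $\alpha$ case in detail.

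First I would record the elementary observation that makes this more than a formality: the slope $k_\alpha=\alpha(s,\dot s)/\dot s$ at a point depends on that point alone, so at any common point the two curves are automatically tangent. Hence an intersection can never be a transversal crossing; it is necessarily a point of contact, and the whole statement reduces to a \emph{local uniqueness} assertion for the vector field $F_\alpha$ near that contact point.

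Second I would verify the hypotheses of the Picard--Lindel\"of theorem (equivalently, run the Gr\"onwall estimate on $h(s)=\dot s_1(s)-\dot s_2(s)$ to force $h\equiv 0$ on a neighbourhood of the contact point, and then propagate). On $AREM$ one has $\dot s>0$, and on a compact neighbourhood $K\subset AREM$ of the contact point one has $\dot s\ge\delta>0$. Assuming $\bm A(s),\bm B(s),\bm C(s)$ are continuous (indeed $C^1$) along the path, each active branch $\alpha_i(s,\dot s)=(-B_i(s)\dot s^2-C_i(s))/A_i(s)$ with $A_i<0$ is $C^1$, hence Lipschitz in $\dot s$ on $K$; a finite maximum of Lipschitz functions is Lipschitz, and division by $\dot s\ge\delta$ preserves this. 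Therefore $F_\alpha$ is continuous on $AREM$ and locally Lipschitz in $\dot s$, the IVP through any point of $AREM$ has a unique maximal solution, and two $\alpha\text{-}profiles$ sharing a point must coincide.

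I expect the main obstacle to be the regularity of $F_\alpha$ near path coordinates $s$ where some $A_i(s)$ changes sign (the zero-inertia situation): there the branch $\alpha_i$ blows up, and one must argue that on the relevant side it drops out of the maximum (its numerator has the sign that drives $\alpha_i\to-\infty$), or, more simply, that since we work strictly inside $AREM$ where $\alpha(s,\dot s)<\beta(s,\dot s)$ is \emph{finite}, only branches with $A_i$ of a fixed sign bounded away from $0$ can be active on a sufficiently small compact neighbourhood, which restores the Lipschitz estimate. Making this reduction precise --- and, in the Gr\"onwall step, accommodating the finitely many $s$ at which the set of active indices changes --- is the technical heart of the argument; the rest is a direct appeal to the standard ODE uniqueness theorem.
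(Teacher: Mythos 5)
Your argument is correct and is essentially the same one the paper relies on: the paper does not prove \emph{Property \ref{p1}} itself but defers to \cite{AVP}, where non-intersection is likewise obtained by viewing each profile as an integral curve of $d\dot s/ds=\alpha(s,\dot s)/\dot s$ (resp.\ $\beta/\dot s$) and invoking uniqueness of solutions (Cauchy--Lipschitz) at a putative common point, the Lipschitz property being the same one the paper invokes in its proof of \emph{Property \ref{p2}}. Your added care about the zero-inertia branches and about staying on a compact set with $\dot s\ge\delta>0$ is a reasonable tightening, not a departure from that route.
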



\begin{proposition}\label{p2}
In the $AREM$ region, if an $\alpha\text{-}profile$ intersects another $\beta\text{-}profile$ at a point $(s=s_c,\dot s = \dot s_c)$,
in terms of path velocity, the $\alpha\text{-}profile$ is greater than the $\beta\text{-}profile$ in the left neighborhood of $s_c$, but less than
the $\beta\text{-}profile$ in the right neighborhood of $s_c$.
\end{proposition}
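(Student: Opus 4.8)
The plan is to reduce the statement to an elementary first-order argument about the difference of the two profiles at the crossing point, exploiting the strict inequality $\alpha<\beta$ that holds throughout $AREM$.

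First I would observe that near $s_c$ both curves may be written as $C^1$ graphs over the path coordinate: since $(s_c,\dot s_c)\in AREM$ we have $\dot s_c>0$, and the slopes $k_\alpha=\alpha(s,\dot s)/\dot s$ and $k_\beta=\beta(s,\dot s)/\dot s$ are finite and locally continuous there. Hence, in a neighborhood of $s_c$, the $\alpha\text{-}profile$ is the graph of a function $\dot s_\alpha(s)$ solving $d\dot s_\alpha/ds=\alpha(s,\dot s_\alpha)/\dot s_\alpha$, the $\beta\text{-}profile$ is the graph of $\dot s_\beta(s)$ solving $d\dot s_\beta/ds=\beta(s,\dot s_\beta)/\dot s_\beta$, and $\dot s_\alpha(s_c)=\dot s_\beta(s_c)=\dot s_c$.

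Next I would set $g(s)=\dot s_\alpha(s)-\dot s_\beta(s)$, so $g(s_c)=0$, and evaluate its derivative at $s_c$. Since both profiles pass through $(s_c,\dot s_c)$,
\[
g'(s_c)=\frac{\alpha(s_c,\dot s_c)}{\dot s_c}-\frac{\beta(s_c,\dot s_c)}{\dot s_c}=\frac{\alpha(s_c,\dot s_c)-\beta(s_c,\dot s_c)}{\dot s_c}<0,
\]
where the sign follows from $\alpha(s_c,\dot s_c)<\beta(s_c,\dot s_c)$ (the defining property of $AREM$) together with $\dot s_c>0$. In particular this shows the two profiles meet transversally: were they tangent at $s_c$, we would have $k_\alpha=k_\beta$ there, contradicting $k_\alpha<k_\beta$ in $AREM$. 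Then, from $g(s_c)=0$ and $g'(s_c)<0$, the definition of the derivative gives a punctured neighborhood of $s_c$ on which $g(s)/(s-s_c)<0$; hence $g(s)>0$ for $s$ slightly smaller than $s_c$ and $g(s)<0$ for $s$ slightly larger, i.e. the $\alpha\text{-}profile$ has the larger path velocity to the left of $s_c$ and the smaller path velocity to the right, which is exactly the assertion.

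I expect the only delicate point to be the regularity of $\alpha$ and $\beta$ near $(s_c,\dot s_c)$: being respectively a minimum and a maximum of finitely many rational functions of $(s,\dot s)$, they are continuous (indeed smooth) wherever the active index set is locally constant, i.e. away from zero-inertia configurations at which some $A_i(s)$ vanishes and $\alpha$ or $\beta$ may jump; at such degenerate points one must argue with one-sided slopes. Assuming, as is implicit whenever $\alpha\text{-}profiles$ and $\beta\text{-}profiles$ are actually integrated, that $(s_c,\dot s_c)$ is a regular point with $\dot s_c>0$, the argument above goes through as stated, and the main obstacle is merely verifying this regularity rather than the (trivial) computation.
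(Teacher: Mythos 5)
Your proof is correct, but it takes a different route from the paper's. The paper proves \emph{Property \ref{p2}} by invoking the ODE \emph{Comparison Theorem} of \cite{Math1}: since $k_\alpha=\alpha/\dot s$ and $k_\beta=\beta/\dot s$ satisfy $k_\alpha<k_\beta$ throughout a neighborhood of $s_c$ and a Lipschitz condition, the two solutions that agree at $s_c$ must be strictly ordered on either side. You instead make a purely local, first-order argument: form the difference $g(s)=\dot s_\alpha(s)-\dot s_\beta(s)$, note $g(s_c)=0$ and $g'(s_c)=(\alpha-\beta)/\dot s_c<0$ using the inequality only at the single point $(s_c,\dot s_c)$, and read off the sign of $g$ on both sides from the definition of the derivative. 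Your version is more elementary (no comparison theorem, no Lipschitz hypothesis, and both sides of $s_c$ are handled in one stroke), and your closing discussion of where $\alpha$ and $\beta$ are actually smooth --- as pointwise max/min of finitely many rational functions, away from zero-inertia configurations --- makes explicit a regularity issue the paper leaves implicit. What the paper's comparison-theorem route buys in exchange is a conclusion valid on the whole interval where $k_\alpha<k_\beta$ holds, not merely on an unquantified neighborhood of $s_c$; that stronger, non-infinitesimal separation is the form of the fact that is reused in the region-trapping arguments for \emph{Properties \ref{p3}} and \emph{\ref{t2}}. For the proposition as literally stated, your argument suffices.
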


\begin{proposition}\label{ps}
In the $AREM$ region, an $\alpha\text{-}profile$ is not tangent to another $\beta\text{-}profile$ at any point.
\end{proposition}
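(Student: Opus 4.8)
The plan is to argue by contradiction, exploiting the strict slope separation that already holds throughout $AREM$. Recall from the preamble of this subsection that in $AREM$ one has $\alpha(s,\dot s) < \beta(s,\dot s)$; dividing by $\dot s > 0$, the pointwise slopes therefore satisfy $k_\alpha(s,\dot s) = \alpha(s,\dot s)/\dot s < \beta(s,\dot s)/\dot s = k_\beta(s,\dot s)$ at every point of $AREM$ away from the line $\dot s = 0$. Moreover, an $\alpha\text{-}profile$ and a $\beta\text{-}profile$ are, by definition, solutions of the scalar ODEs $d\dot s/ds = \alpha(s,\dot s)/\dot s$ and $d\dot s/ds = \beta(s,\dot s)/\dot s$, so each is a $C^1$ curve in the $(s,\dot s)$ plane as long as it stays inside $AREM$, and in particular each has a well-defined tangent line at every such point.

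Now suppose, for contradiction, that some $\alpha\text{-}profile$ is tangent to some $\beta\text{-}profile$ at a point $(s_c,\dot s_c)\in AREM$. By definition of tangency the two curves pass through $(s_c,\dot s_c)$ and share the same tangent line there, i.e. their slopes coincide: $k_\alpha(s_c,\dot s_c) = k_\beta(s_c,\dot s_c)$. This directly contradicts the strict inequality $k_\alpha(s_c,\dot s_c) < k_\beta(s_c,\dot s_c)$ established above, so no such tangency point can exist. The degenerate case $\dot s_c = 0$ (where the curves would be vertical) is handled separately by noting that the profiles of interest lie in the open region $\dot s>0$, and I would add one sentence to dispose of it.

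I do not expect a substantial obstacle here; the only point requiring a little care is to justify that an $\alpha\text{-}profile$ really does have slope $\alpha(s_c,\dot s_c)/\dot s_c$ at the alleged tangency point, even though $\alpha(\cdot,\cdot)$ and $\beta(\cdot,\cdot)$ — being a pointwise maximum and a pointwise minimum of rational functions — are in general only piecewise smooth. The resolution is that the \emph{value} $\alpha(s_c,\dot s_c)$ is nonetheless single-valued, so the defining ODE fixes a unique first derivative there and the curve is $C^1$ through the point. It is also worth stressing the role of this property: together with \emph{Property \ref{p2}} it shows that whenever an $\alpha\text{-}profile$ meets a $\beta\text{-}profile$ in $AREM$ the intersection is transversal — a genuine crossing in which the $\alpha\text{-}profile$ passes from above to below — which is precisely what makes the switch-point construction in \textbf{NI-2} and \textbf{NI-3} well defined.
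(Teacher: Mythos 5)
Your proof is correct and follows essentially the same route as the paper's: a proof by contradiction in which tangency at a point of $AREM$ would force $k_\alpha = k_\beta$ there, contradicting the strict inequality $k_\alpha < k_\beta$ that holds throughout $AREM$. The extra remarks you add (single-valuedness of $\alpha,\beta$ at the tangency point and exclusion of $\dot s = 0$) are reasonable refinements but do not change the argument.
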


\subsection{Property of $sp_{\beta\rightarrow\alpha}$}

The point $sp_{\beta\rightarrow\alpha}$ is the intersection point between $\alpha\text{-}profile$ and $\beta\text{-}profile$, denoted as the red $\rhd$ in Fig. \ref{con5}. In the iterative process of \emph{NI} (see Section II-C), a $\beta\text{-}profile$ may intersect a finite number of integrated backward $\alpha\text{-}profiles$ \cite{NI_Shin}. Which one of these intersection points is finally chosen as $sp_{\beta\rightarrow\alpha}$ on the $\beta\text{-}profile$? The answer is given in \emph{Property \ref{p3}}, which is first presented in this letter with rigorous proof.

\begin{figure}[t]
      \centering
      \includegraphics[scale=0.6]{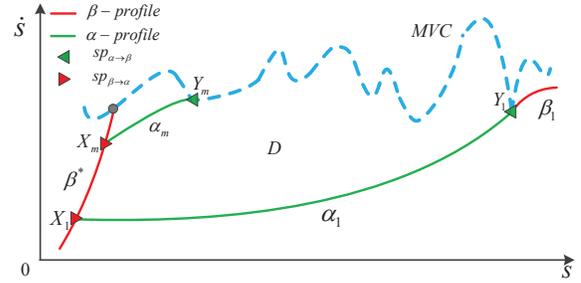}
      \caption{$m>1:$ Intersection points $X_i$, $i\in[1,m]$ are on $\beta^*$, wherein $X_j$, $1 < j < m $ is on \overarc[0.55]{$X_1X_m$}. Each $X_i$ has one corresponding decelerating curve $\alpha_i$ and one switch point $Y_i$. The region $D$ is enclosed by $\alpha_1$, $\beta^*$ and $MVC$.}
      \label{con5}
\end{figure}
\begin{figure}[t]
      \centering
      \includegraphics[scale=0.6]{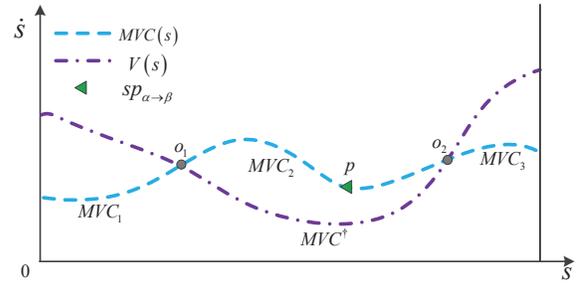}
      \caption{The maximum velocity curve is altered from  $MVC=MVC_1+MVC_2+MVC_3$ to $MVC^*=MVC_1+MVC^\dagger+MVC_3$.}
      \label{MVCt}
\end{figure}

\begin{proposition}\label{p3}
If one $\beta\text{-}profile$ intersects $m \geq 1$ $\alpha\text{-}profiles$, respectively at points $X_i, i\in [1,m]$, and in terms of path coordinate, $X_i$ is less than $X_j$, $1 \leq i < j \leq m$,
then, $X_1$ is finally chosen as $sp_{\beta\rightarrow\alpha}$ on the $\beta\text{-}profile$ after finite iterations.
\end{proposition}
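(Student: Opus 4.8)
The plan is to follow a single run of \emph{NI} (Section II-C) and track the point at which the $\beta\text{-}profile$ in the statement --- call it $\beta^*$, as in Fig.~\ref{con5} --- is truncated, showing that the successive truncation points move strictly to the left, so that the last one is forced to be $X_1$. Order the intersection points by path coordinate as $X_1<X_2<\cdots<X_m$, and let $\alpha_i,Y_i$ be the backward-integrated decelerating curve and the switch point associated with $X_i$; by hypothesis these $m$ curves are exactly the $\alpha\text{-}profiles$ that are ever stopped on $\beta^*$ during the run, and the point carried as $sp_{\beta\rightarrow\alpha}$ on $\beta^*$ is, at any stage, its current right endpoint.

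Two structural facts drive the argument. \emph{(i) $\beta^*$ only shrinks.} Whenever \emph{NI} places an $sp_{\beta\rightarrow\alpha}$ on $\beta^*$ at a point $X$, a fresh $\beta\text{-}profile$ is launched from the corresponding switch point and the part of $\beta^*$ to the right of $X$ --- together with every $\beta$- and $\alpha$-segment produced since the previous truncation of $\beta^*$ --- is discarded; $\beta^*$ is never integrated forward again, so in every later iteration its right endpoint is at $X$ or to the left of $X$. \emph{(ii) A backward $\alpha\text{-}profile$ that stops on $\beta^*$ stops strictly inside it.} Such a profile is integrated backward until its first intersection with the generated $\beta$-profile; if that intersection lands on $\beta^*$, it cannot coincide with the current right endpoint of $\beta^*$, because that endpoint already lies on the previously retained $\alpha\text{-}profile$ and, by \emph{Property \ref{p1}}, two distinct $\alpha\text{-}profiles$ share no point of the $AREM$ (\emph{Property \ref{ps}} moreover makes the crossing at each $X_i$ transversal, so the integration genuinely terminates there). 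Hence the landing point has strictly smaller path coordinate than the current right endpoint of $\beta^*$.

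Now let $X^{(1)},X^{(2)},\dots$ be the points of $\beta^*$ at which \emph{NI} successively places $sp_{\beta\rightarrow\alpha}$, in the order the iterations produce them; each $X^{(j)}$ is one of the $X_i$. Right after $X^{(j)}$ is created the right endpoint of $\beta^*$ is $X^{(j)}$; by \emph{(i)} it never moves right afterwards, so by \emph{(ii)} the next truncation point $X^{(j+1)}$ has strictly smaller path coordinate than $X^{(j)}$. Thus $X^{(1)},X^{(2)},\dots$ is strictly decreasing in path coordinate; since it ranges over exactly $\{X_1,\dots,X_m\}$, it must be $X_m,X_{m-1},\dots,X_1$ in that order, so the last truncation point is $X_1$. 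After this truncation $\beta^*$ ends at $X_1$, and any further truncation would mean an $(m+1)$th $\alpha\text{-}profile$ stops on $\beta^*$, contradicting the hypothesis; hence $X_1$ is the value of $sp_{\beta\rightarrow\alpha}$ that $\beta^*$ carries when \emph{NI} terminates. Finiteness follows because \emph{NI} has only finitely many switch points (Section II-C), hence integrates only finitely many backward $\alpha\text{-}profiles$, hence truncates $\beta^*$ finitely often, so $X_1$ is reached after finitely many iterations.

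The main obstacle I expect is making fact \emph{(i)} airtight: one must be precise about what ``the generated $\beta$-profile'' denotes at each stage and verify that landing a backward $\alpha\text{-}profile$ on $\beta^*$ indeed discards all $\beta$- and $\alpha$-segments spawned since the previous truncation of $\beta^*$, so that its right endpoint can only decrease. The region $D$ of Fig.~\ref{con5}, bounded by $\alpha_1$, $\beta^*$ and the $MVC$, is a convenient device here: combined with \emph{Property \ref{p2}} it shows that every profile segment created once \emph{NI} has entered $D$ stays confined there until some curve lands back on $\beta^*$, which is exactly the picture needed to justify \emph{(i)}.
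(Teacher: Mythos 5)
Your proof is correct in substance but takes a genuinely different route from the paper's (Appendix C). The paper argues by elimination through a trapping region: it orders the switch points $Y_1,\dots,Y_m$ using \emph{Property \ref{p1}} and then observes that if any $X_i$ with $i>1$ were retained as $sp_{\beta\rightarrow\alpha}$, all subsequently generated $\alpha$- and $\beta$-segments would remain confined to the region $D$ of Fig. \ref{con5} (enclosed by $\alpha_1$, $\beta^*$ and $MVC$) and could never cross $\alpha_1$, so only the choice $X_1$ lets the trajectory leave $D$ along $\alpha_1$ and continue to the right of $Y_1$. You instead follow the run of the algorithm and prove a monotonicity statement --- successive truncation points of $\beta^*$ strictly decrease in path coordinate --- which forces the visiting order $X_m,X_{m-1},\dots,X_1$ and hence the final value $X_1$. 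Your version is closer to the ``after finite iterations'' wording of the statement and yields extra information, namely the order in which the $X_i$ are produced (matching the $o_2\to o_3$ update reported in \emph{Case 1} of Section IV), whereas the paper's version exhibits more directly why $X_1$ is the only choice compatible with a trajectory that extends beyond $Y_1$. The step you flag as the main obstacle does close: by \emph{Property \ref{p2}} the discarded arc of $\beta^*$ to the right of the current endpoint $X^{(j)}$ lies strictly above the retained profile $\alpha^{(j)}$, and by \emph{Property \ref{p1}} any later backward $\alpha$-profile that reaches $\beta^*$ must stay on the opposite side of $\alpha^{(j)}$, so it can never reach that discarded arc --- which is exactly the geometric content the paper packages as the region $D$.
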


\begin{lemmas}
In Fig. \ref{con5}, $Y_1$ may be the terminal point $(s_e,\dot s_e)$. In this special situation, \emph{Property \ref{p3}} still holds. The case $m = 1$ obviously holds. Meanwhile, the case $m>1$ also holds since the trajectory starting from $X_i,i>1$ cannot arrive at the terminal point according to \emph{Properties \ref{p1}-\ref{ps}}. In addition, other points $Y_i$, $i > 1$ cannot be the terminal point $(s_e,\dot s_e)$ since \emph{NI} has completed all procedures at the terminal point (see the procedure \textbf{NI-3} in Section II-C).\QEDA
\end{lemmas}

The \emph{Properties \ref{p1}-\ref{p3}} show the evolution of accelerating/decelerating curves and their intersection points.

\subsection{Property of NI with kinematic constraints}

The original \emph{NI} method \cite{NI_Shin} generates a time-optimal trajectory with bounded torque. However, when velocity constraints (bounded actuator velocity and path velocity) are taken into account, the original \emph{NI} (as shown in Section II-C) may result in a failure. Several examples for this property are shown in the work in \cite{NI_Zlajpah,NI_Lamiraux}, \emph{but the concrete failure conditions and the detailed proof have not been reported.} In the subsequent \emph{Property \ref{t2}}, we will elaborate the failure conditions and provide the mathematical proof.
Note that \emph{Property \ref{t1}} is first presented and proven to indicate the existence of tangent switch points on the limit curve considering velocity constraints, which is used in the proof of \emph{Property \ref{t2}}.

Actuator velocity constraints are transformed into path velocity constraints by kinematic models of robots,
therefore, velocity constraints are represented as
\begin{equation}\label{V}
V(s):s\rightarrow \dot s , \ \ s\in [0,s_e],
\end{equation}
where the scalar $V(s)$ is the maximum path velocity.
Due to the velocity constraints, the maximum velocity curve is altered as
\begin{equation}\label{MVCSTAR}
MVC^*(s)=\mathrm{min}(MVC(s),V(s)), \ \ s\in [0,s_e].
\end{equation}
In the region enclosed by $MVC^*, s= 0, s=s_e,\dot s = 0$, accelerating and decelerating curves
satisfy all velocity and torque constraints.
In addition, the part of $MVC^*$, which is different from $MVC$, is represented as
\begin{equation}\label{MVCtt}
MVC^\dagger(s)\!=\!\{MVC^*(s)\!|\!MVC^*(s)\!<\!MVC(s),s\in[0,s_e]\}.
\end{equation}
For instance, the dash-dot curve \overarc{$o_1o_2$} is $MVC^\dagger$ in Fig. \ref{MVCt}.

The maximum velocity curve altering from $MVC$ to $MVC^*$ results in a decreasing number of tangent switch points.
For instance, the tangent switch point $p$ disappears due to $MVC^\dagger$ instead of $MVC_2$ in Fig. \ref{MVCt}.

\begin{proposition}\label{t1}
Tangent switch points are nonexistent on $MVC^\dagger$.
\end{proposition}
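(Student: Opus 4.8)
\textbf{Proof proposal for \emph{Property \ref{t1}}.}

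The plan is to contradict the defining condition of a tangent switch point using the fact that every point of $MVC^\dagger$ lies strictly inside the torque-admissible region. First I would note that, by definition (\ref{MVCtt}), a point $(s,\dot s)$ on $MVC^\dagger$ satisfies $\dot s = MVC^*(s) = V(s) < MVC(s)$; setting aside the degenerate endpoints $s\in\{0,s_e\}$ and the trivial case $V(s)=0$ (where the point sits on $\dot s=0$ and no switching is at stake), this also gives $\dot s>0$. Hence $(s,\dot s)$ belongs to $AREM$, the admissible region associated with the torque constraints with its boundary $MVC$ removed. By the strict dichotomy recalled at the beginning of Section III-A, inside $AREM$ one has $\alpha(s,\dot s)<\beta(s,\dot s)$, and dividing by $\dot s>0$ yields $k_\alpha(s,\dot s)=\alpha(s,\dot s)/\dot s<\beta(s,\dot s)/\dot s=k_\beta(s,\dot s)$ at every point of $MVC^\dagger$.

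Next I would invoke the definition of a tangent switch point given in Section II-B: a point of the maximum velocity curve is a tangent switch point only if the curve slope $k_{mvc}$ there satisfies $k_\alpha = k_{mvc} = k_\beta$. Geometrically this says that both the $\alpha\text{-}profile$ integrated backward and the $\beta\text{-}profile$ integrated forward from that point are tangent to the curve there; since a $C^1$ curve has a unique tangent direction at each of its points, this forces $k_\alpha = k_\beta$. But on $MVC^\dagger$ we have just shown $k_\alpha < k_\beta$ strictly, a contradiction. Therefore no point of $MVC^\dagger$ can be a tangent switch point, which is exactly the assertion of \emph{Property \ref{t1}}.

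The argument is short, and I do not expect a substantive obstacle; the care needed is essentially in the book-keeping of definitions. The two points to be pinned down are: (i) that $MVC^\dagger$ genuinely lies in $AREM$, so that the strict inequality $\alpha<\beta$ (hence $k_\alpha<k_\beta$) is available — this is immediate from the strictness in (\ref{MVCtt}) once the path endpoints and the zero-velocity case are excluded as noted; and (ii) that the relevant notion is the paper's own ``tangent switch point'' condition $k_{mvc}=k_\alpha=k_\beta$, rather than a weaker tangency to a single profile. It may also be worth remarking in passing that points where $MVC^\dagger$ fails to be smooth are not at issue here: there $k_{mvc}$ is undefined, and such points would be classified under the discontinuity category rather than the tangent one.
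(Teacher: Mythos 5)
Your argument is correct and follows essentially the same route as the paper's own proof: since $MVC^\dagger$ lies strictly below $MVC$, the strict inequality $\alpha(s,\dot s)<\beta(s,\dot s)$ holds there, hence $k_\alpha<k_\beta$, contradicting the tangency condition $k_\alpha=k_{mvc}=k_\beta$. Your extra care about excluding $\dot s=0$ and the endpoints is a minor tightening the paper leaves implicit, but the substance is identical.
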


The \emph{NI} method \cite{NI_Shin} with torque constraints, generates a time-optimal trajectory $T$. After considering velocity constraints $V(s)$, the maximum velocity curve is altered from $MVC$ to $MVC^*$, which causes that tangent switch points on the part $MVC^\dagger$ of $MVC^*$ are nonexistent according to \emph{Property \ref{t1}}. It has negative effects on searching switch points in the \emph{NI} method, and may result in failure of trajectory planning tasks.

\begin{figure}[thpb]
      \centering
      \includegraphics[scale=0.6]{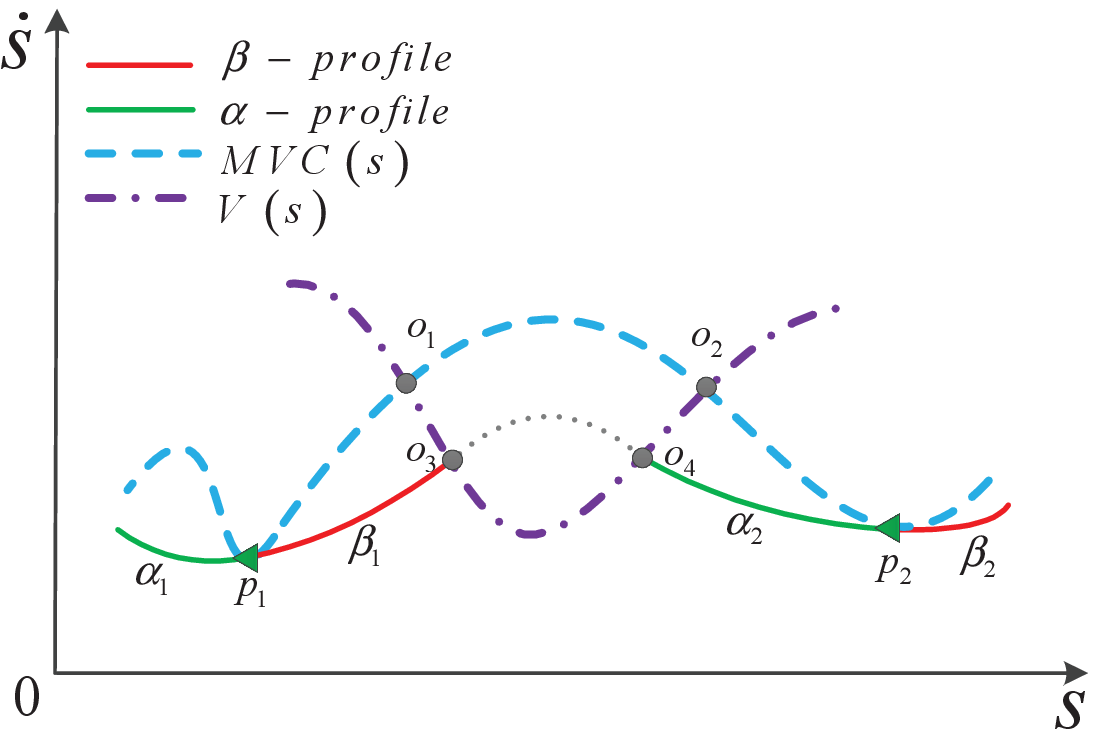}
      \caption{Left: $\beta\text{-}profile$, Right: $\alpha\text{-}profile$ }
      \label{wr1}
      \centering
      \includegraphics[scale=0.6]{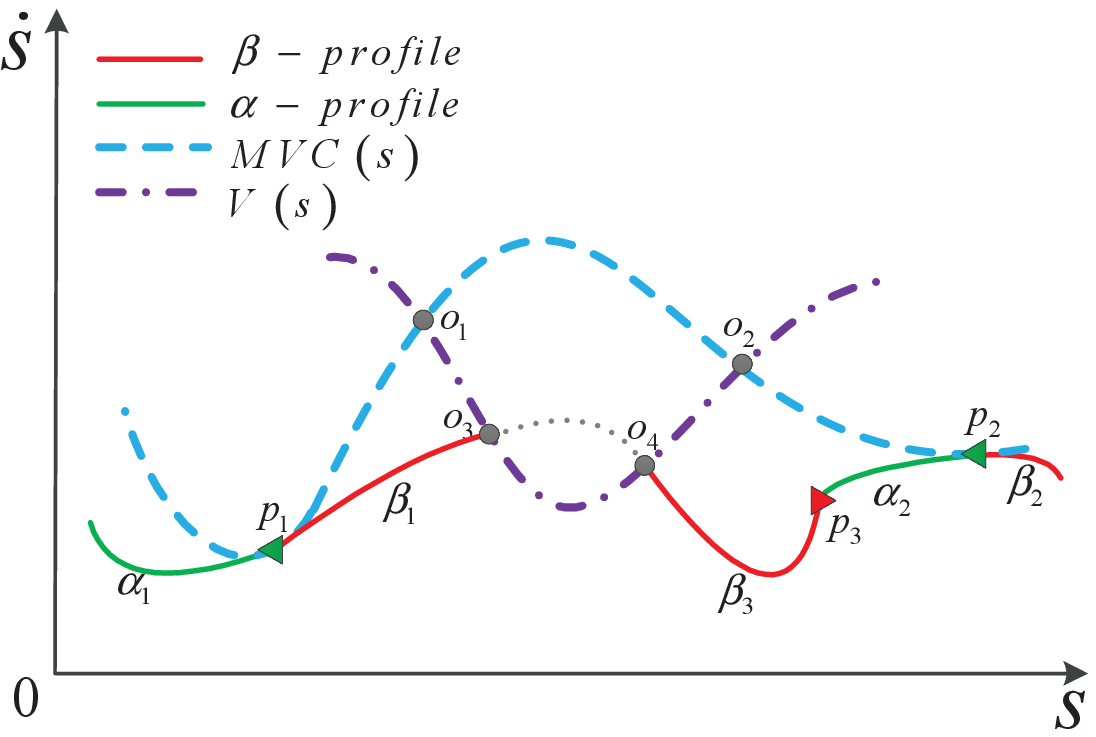}
      \caption{Left: $\beta\text{-}profile$, Right: $\beta\text{-}profile$ }
      \label{wr2}
      \centering
      \includegraphics[scale=0.6]{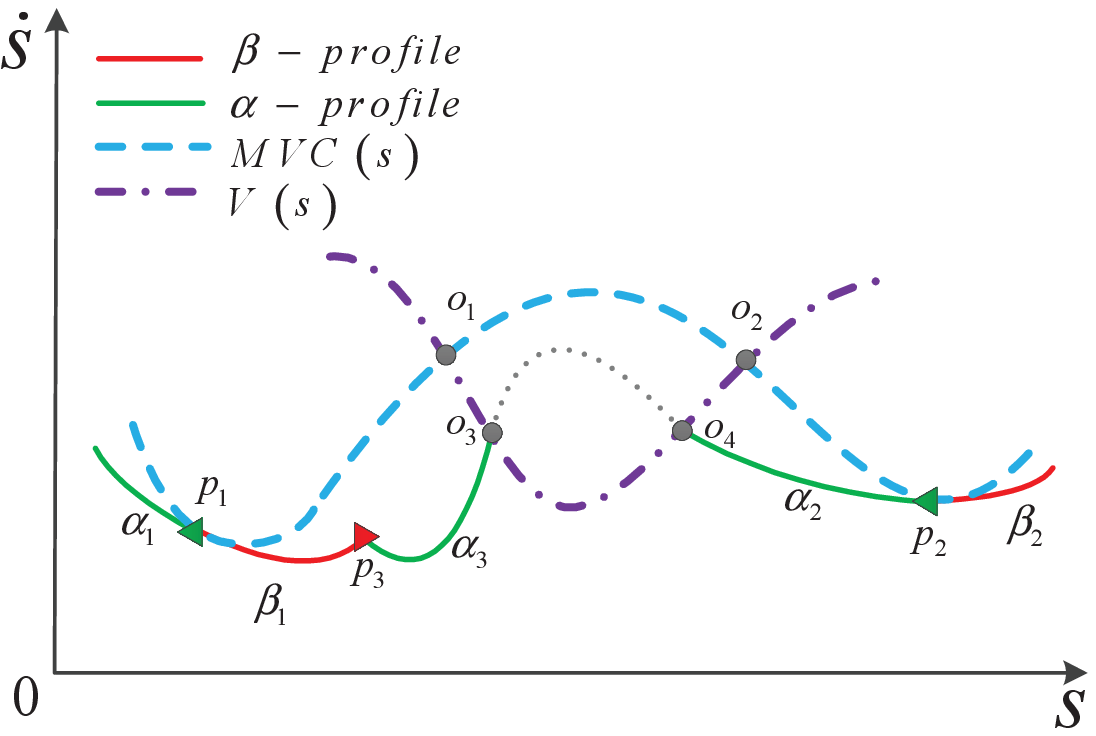}
      \caption{Left: $\alpha\text{-}profile$, Right: $\alpha\text{-}profile$ }
      \label{wr3}
      \centering
      \includegraphics[scale=0.6]{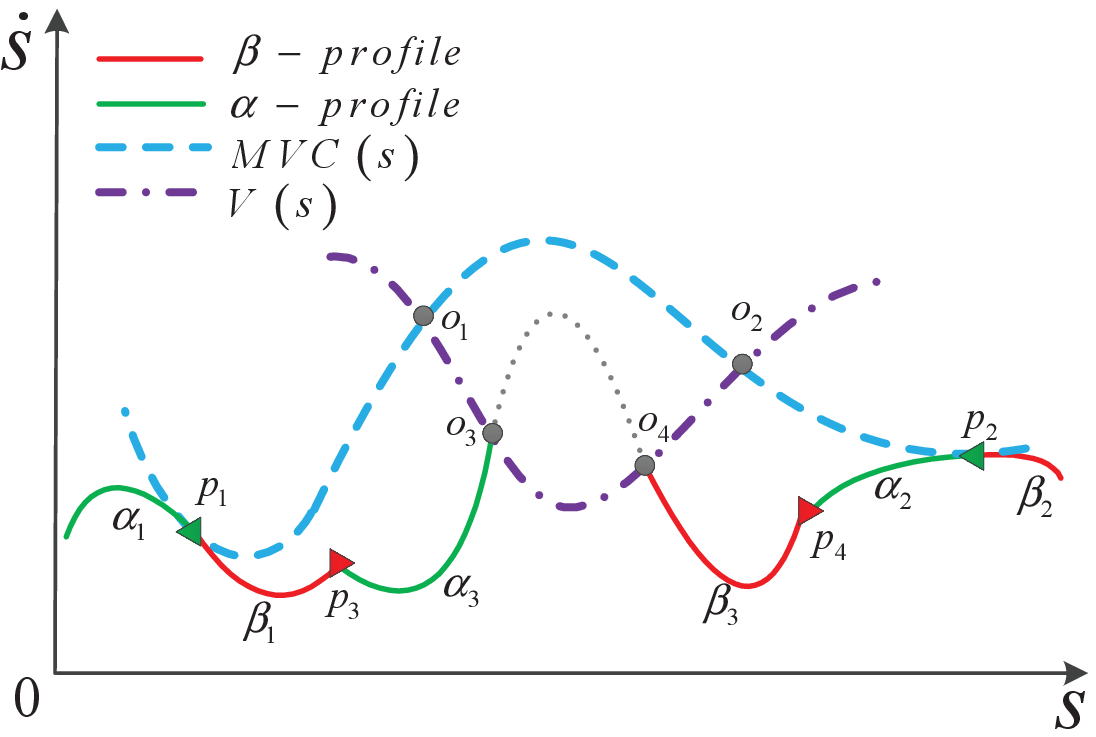}
      \caption{Left: $\alpha\text{-}profile$, Right: $\beta\text{-}profile$ }
      \label{wr4}
\end{figure}

\begin{proposition}\label{t2}
If the following conditions hold:
\begin{description}
  \item[$C_1:$] $\exists s^* \in [0,s_e], \ \ MVC^*(s^*) < T(s^*)$,
  \item[$C_2:$] discontinuity and zero-inertia switch points on $MVC^\dagger$ are nonexistent,
\end{description}
then, the \emph{NI} method using $MVC^*$ fails to find out a feasible trajectory.
\end{proposition}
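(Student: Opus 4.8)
To prove \emph{Property~\ref{t2}} the plan is to show that running the procedure of Section~II-C with $MVC^*$ in place of $MVC$ either fails to terminate with any trajectory or terminates with one violating the velocity constraint; I therefore assume, for contradiction, that it terminates with a \emph{feasible} trajectory $T^*$ — a curve joining $(0,\dot s_0)$ to $(s_e,\dot s_e)$, built from $\beta\text{-}profiles$, $\alpha\text{-}profiles$, $sp_{\alpha\rightarrow\beta}$ and $sp_{\beta\rightarrow\alpha}$, with $T^*\le MVC^*$ on $[0,s_e]$. First I localize the conflict: since $T\le MVC$, condition $C_1$ gives $MVC^*(s^*)<T(s^*)\le MVC(s^*)$, so $s^*$ lies in the interior of the $s$-projection of $MVC^\dagger$; let $(a,b)\ni s^*$ be the connected component of $\{\,s:MVC^*(s)<MVC(s)\,\}$ containing $s^*$, so $MVC^*=MVC^\dagger<MVC$ on $(a,b)$ and $MVC^*=MVC$ elsewhere (the cases $a=0$ or $b=s_e$ only simplify matters, the endpoint velocities lying below $MVC^*$).

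The first substantive step is that $T^*$ carries no switch point on the open arc $MVC^\dagger$: a tangent one is excluded by \emph{Property~\ref{t1}}, a discontinuity or zero-inertia one by $C_2$. Since every $sp_{\alpha\rightarrow\beta}$ lies on $MVC^*$, $T^*$ thus contains no $sp_{\alpha\rightarrow\beta}$ over $(a,b)$, and, being monotone in $s$, on $(a,b)$ it has one of only three shapes — a single $\beta\text{-}profile$, a single $\alpha\text{-}profile$, or a $\beta\text{-}profile$ followed by an $\alpha\text{-}profile$ joined at one $sp_{\beta\rightarrow\alpha}$ — and it never touches $MVC^\dagger$. Distinguishing according to which kind of arc $T^*$ uses just to the left of $a$ and just to the right of $b$ gives precisely the four local pictures of Figs.~\ref{wr1}--\ref{wr4}, which I would then rule out one at a time.

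In every picture the mechanism is the same. Because $MVC^*=MVC$ outside $(a,b)$ and $MVC^\dagger$ carries no switch point, the runs of NI on $MVC$ and on $MVC^*$ coincide until NI first engages $(a,b)$; at that moment one isolates an accelerating curve common to both runs — the one then being integrated forward, or the one that produced the current hitting point. The run on $MVC$ produces $T$, which over $(a,b)$ follows this accelerating curve until some $\alpha\text{-}profile$ cuts it; combining $C_1$, i.e.\ $T(s^*)>MVC^\dagger(s^*)$, with \emph{Properties~\ref{p1}--\ref{ps}} — by which an accelerating curve, continued past any interception, stays above the intercepting $\alpha\text{-}profiles$ — shows that this accelerating curve, continued, lies strictly above $MVC^\dagger$ at $s^*$. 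Hence in the run on $MVC^*$ it first meets $MVC^\dagger$ at some $\sigma\in(a,s^*)$, and \textbf{NI-2}, searching forward from $\sigma$, sweeps the whole remaining $MVC^\dagger$ arc without finding any $sp_{\alpha\rightarrow\beta}$; the $\alpha\text{-}profile$ NI then builds is therefore a backward integral from an $sp_{\alpha\rightarrow\beta}$ with $s$-coordinate $\ge b$, or from $(s_e,\dot s_e)$. Being strictly increasing to the left, that $\alpha\text{-}profile$ either (i) never intersects the generated accelerating curve — at $\sigma$ it already exceeds that curve's value $MVC^\dagger(\sigma)$ and the two diverge as $s$ decreases — so the `integrate backward until it intersects the generated $\beta\text{-}profile$' step of \textbf{NI-2}/\textbf{NI-3} never closes and NI returns nothing; or (ii) it does intersect, but then, monotone across $(a,b)$ while $MVC^\dagger$ dips below it near $s^*$, it climbs above $MVC^\dagger$ on a subinterval, so $T^*$ exceeds $MVC^*$ there. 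Both alternatives contradict feasibility of $T^*$.

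The step I expect to be the main obstacle is alternative~(ii): ruling out that the backward-integrated $\alpha\text{-}profile$ slips entirely under $MVC^\dagger$ and reconnects below it, i.e.\ that a feasible `accelerate up, switch, decelerate down across the dip' traversal of $(a,b)$ exists. This is exactly where the full content of $C_1$ is used — not just that $MVC^*$ cuts $T$, but that it cuts the \emph{pointwise-maximal} torque-feasible profile, which forces $\sigma$ strictly to the left of $s^*$ and makes the offending subinterval of $(a,b)$ nonempty — and carrying this out in each of the four configurations of Figs.~\ref{wr1}--\ref{wr4} is the bulk of the work.
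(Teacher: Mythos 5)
Your overall strategy matches the paper's: you localize the conflict to the component of $MVC^\dagger$ lying below $T$, use \emph{Property~\ref{t1}} together with $C_2$ to conclude that no $sp_{\alpha\rightarrow\beta}$ exists on that arc, reduce to the four local configurations of Figs.~\ref{wr1}--\ref{wr4}, and then argue that the backward $\alpha\text{-}profiles$ generated to the right of the dip cannot feasibly reconnect with the $\beta\text{-}profiles$ generated to its left. The parts corresponding to the paper's \emph{Steps 1, 2 and 4} are essentially the paper's argument (the paper is slightly more careful in \emph{Step 2}, noting that the switch points on the arc preceding $MVC^\dagger$ are finitely many and therefore run out, but your ``the two runs coincide until NI engages $(a,b)$'' is the same idea).

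The gap is exactly where you predicted it: your alternative~(ii). You rule out a feasible ``decelerate back down across the dip'' reconnection by asserting that the backward-integrated $\alpha\text{-}profile$ is ``strictly increasing to the left'' / ``monotone across $(a,b)$''. That monotonicity is not a property of $\alpha\text{-}profiles$: $\alpha(s,\dot s)$ in (\ref{alp}) is a maximum over terms that can be positive (e.g.\ on path segments where the robot is forced to accelerate), so a backward $\alpha\text{-}profile$ can decrease as $s$ decreases, slip under $MVC^\dagger$, and in principle still meet the left-hand $\beta\text{-}profile$ without ever exceeding $MVC^*$. Even granting monotonicity, the starting value of that profile on $MVC^*$ at some $s\ge b$ need not dominate $MVC^\dagger(s^*)$, so the claimed crossing of $MVC^\dagger$ does not follow. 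The paper closes this step by a different mechanism: it supposes such an intersection occurs \emph{inside the admissible region} and invokes \emph{Property~\ref{p3}} to conclude that the offending $\alpha\text{-}profile$ would then have been selected as part of $T$ in the torque-only run; since that profile lies below $MVC^*$ over the dip while $C_1$ forces $T(s^*)>MVC^*(s^*)$, this is a contradiction. You would need either to import that \emph{Property~\ref{p3}}-based argument or to supply a genuinely different justification; as written, the decisive case is asserted rather than proved.
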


\begin{proof}
The proof is proceeded in 4 steps.

\emph{Step 1: To show four essential cases of $C_1$.}

The time-optimal trajectory $T$, which is obtained from the \emph{NI} method using $MVC$, consists
of several $\alpha\text{-}profiles$ and $\beta\text{-}profiles$, satisfying $T(s) \leq MVC(s), s\in[0,s_e]$. According to (\ref{MVCtt}) and $C_1$, there must exist one part of $MVC^\dagger$ below $T$, and this part is called as $MVC^\ddag$. On both sides of $MVC^\ddag$, the trajectory $T$ could be accelerating or decelerating, thus there are four essential cases for the condition $C_1$ as shown in Figs. \ref{wr1}-\ref{wr4}.
The purple dash-dot line \overarc[0.65]{$o_3o_4$} represents $MVC^\ddag$.
The gray dotted line $\overline{o_3o_4}$ and red/green solid lines constitute the trajectory $T$.
The green $\lhd$ and red $\rhd$ are $sp_{\alpha\rightarrow\beta}$ and $sp_{\beta\rightarrow\alpha}$ points respectively.
These essential cases are regarded as basic components of other complex cases, thus, for clarify, the proof of complex cases will be presented in \emph{Remark 2}.

\emph{Step 2: To prove that for each essential case, the NI method using $MVC^*$ must run to the procedure} \textbf{NI-2} \emph{in Section II-C, which is to search forward switch point along $MVC^\dagger$ (the purple dash-dot curve \overarc[0.65]{$o_1o_2$}) from the hitting point}.

Let $p_1$ be the neighborhood $sp_{\alpha\rightarrow\beta}$ of $T$ on the left side of $MVC^\dagger$, which also can be $p_1=(0,\dot s_0)$. Starting from $p_1$, \emph{NI} integrates $\beta_1$ forward. For essential cases as Figs. \ref{wr1}-\ref{wr2}, the accelerating curve $\beta_1$ hits $MVC^\dagger$, then \emph{NI} will search forward $sp_{\alpha\rightarrow\beta}$ along $MVC^\dagger$ from the hitting point of \overarc[0.65]{$o_1o_3$}. For essential cases as Figs. \ref{wr3}-\ref{wr4},
all accelerating curves including $\beta_1$ and $\beta\text{-}profiles$ from \overarc[0.65]{$p_1o_1$} cannot intersect $\alpha_3$ for \emph{Property \ref{p2}}, thus, it must intersect \overarc[0.65]{$p_1o_1$} or \overarc[0.65]{$o_1o_3$}. If the hit curve is \overarc[0.65]{$o_1o_3$}, then \emph{NI} will search forward $sp_{\alpha\rightarrow\beta}$ from the hitting point along
$MVC^\dagger$. If the hit curve is \overarc[0.65]{$p_1o_1$}, then \emph{NI} will search forward $sp_{\alpha\rightarrow\beta}$ from the hitting point along \overarc[0.65]{$p_1o_1$}. The number of switch points on \overarc[0.65]{$p_1o_1$} is finite \cite{NI_Shin},
therefore switch points on \overarc[0.65]{$p_1o_1$} will run out, and \emph{NI} will go on searching forward $sp_{\alpha\rightarrow\beta}$ along $MVC^\dagger$.

\emph{Step 3: To prove that those $\alpha\text{-}profiles$, starting from the right side of $o_4$, cannot intersect $\beta\text{-}profiles$, which are on the left side of $o_3$ and generated in iterative process of NI.}

Based on \emph{Property \ref{t1}} and condition $C_2$ of \emph{Property \ref{t2}}, switch points on $MVC^\dagger$ are nonexistent. Therefore, \emph{NI} will go on searching forward switch points on the right side of $o_2$. In Figs. \ref{wr1}-\ref{wr4}, the integrated backward $\alpha\text{-}profile$, starting from the $sp_{\alpha\rightarrow\beta}$ of \overarc[0.65]{$o_2p_2$}, cannot intersect $\alpha_2,\beta_3$ according to \emph{Properties \ref{p1}-\ref{p2}}, thus, it must hit the purple dash-dot \overarc[0.65]{$o_4o_2$} or cyan dash \overarc[0.65]{$o_2p_2$}.

Moreover, in the admissible region, starting from the right side of $p_2$, integrated backward $\alpha\text{-}profiles$ cannot intersect those $\beta\text{-}profiles$ generated by \emph{NI} and on the left side of $o_3$, which is proven by contradiction. Assume that in the admissible region, from the right side of $p_2$, an integrated backward $\alpha\text{-}profile$ intersects one of those $\beta\text{-}profiles$. Then, the $\alpha\text{-}profile$ should be part of the trajectory $T$ based on \emph{Property \ref{p3}}. However, in terms of path velocity, the $\alpha\text{-}profile$ is less than $MVC^*$, which contradicts with  $C_1$. Therefore, this assumption is invalid, and $\alpha\text{-}profiles$ from the right side of $p_2$ cannot intersect those $\beta\text{-}profiles$ generated on the left side of $o_3$.

In addition, if $p_2$ is the terminal point $(s_e,\dot s_e)$, then starting from $p_2$ and switch points at the right side of $o_2$, all integrated backward $\alpha\text{-}profiles$ must hit the purple dash-dot \overarc[0.65]{$o_4o_2$} or cyan dash \overarc[0.65]{$o_2p_2$} according to \emph{Properties \ref{p1}-\ref{ps}}, which also indicates that this step holds.

\emph{Step 4: To synthesize above analysis and indicate failure of trajectory planning tasks.}

The above \emph{Step 1-3} indicate that if conditions of \emph{Property \ref{t2}} hold, then the \emph{NI} method using $MVC^*$ fails to output a feasible trajectory:
in the iterative process,
$\alpha\text{-}profiles$ and $\beta\text{-}profiles$ on two different sides of $MVC^\ddag$ do not intersect in the admissible region, which causes that the final trajectory is incomplete.
In summary, \emph{Property \ref{t2}} holds.
\end{proof}


\begin{lemmas}
{For other complex cases mentioned in \emph{Step 1}, there may exist many parts of $MVC^\dagger$ below \emph{T}. The trajectory \emph{T} could be accelerating or decelerating at the first and last part as \emph{Step 1}. In these cases, \emph{NI} still searches switch points along $MVC^\dagger$ from the first part in the same reason as \emph{Step 2}. The condition  $C_2$ indicates that the $sp_{\alpha\rightarrow\beta}$ on $MVC^\dagger$ is nonexistent. Starting from switch points at the right side of the last part, all $\alpha\text{-}profiles$ cannot intersect $\beta\text{-}profiles$ at the left side of the first part as \emph{Step 3}. Therefore, in these complex cases, $\alpha\text{-}profiles$ and $\beta\text{-}profiles$ on two different sides of $MVC^\dagger$ do not intersect in the admissible region. It indicates that \emph{Property \ref{t2}} still holds for complex cases.} \QEDA
\end{lemmas}

In the presence of velocity and torque constraints, \emph{Property \ref{t2}} actually gives  sufficient conditions for failure of the \emph{NI} method in \cite{NI_Shin}, which is important because it is theoretically shown that the failure cases indeed exists for this method.
In the following, a necessary and sufficient failure condition is given by a numerical `run-and-test' algorithm (\emph{RT}):

\begin{description}
  \item[\textbf{RT-1.}] In the plane $(s,\dot s)$, the curve $MVC^*(s)$ is obtained with (\ref{MVCSTAR}). It is initialized that the point $p$ is $(0,\dot s_0)$, boolean variable $isContinuous$ is $TRUE$, the longest continuous trajectory $T^*$ from $(0,\dot s_0)$ is null and the scalar $sLast$ is zero. Then go to \textbf{RT-2}.
  \item[\textbf{RT-2.}] Starting from the point $p$, a $\beta\text{-}profile$ is integrated forward until it hits the boundaries of the admissible region at $(s= s_h,\dot s = \dot s_h)$.
      If $isContinuous$ is $TRUE$, then \emph{RT} will call the subfunction $addProfile$($T^*$, $\beta\text{-}profile$), and $sLast$ is updated as $s_h$.
      And go to \textbf{RT-3}.
  \item[\textbf{RT-3.}]
      From the hitting point, the first switch point found along $MVC^*$ or terminal point is assigned to the point $q$. Starting from $q$, an $\alpha\text{-}profile$ is integrated backward until it intersects  one of following lines:
       \begin{itemize}
         \item The trajectory $T^*$, then \emph{RT} will call the subfunction $addProfile$($T^*$, $\alpha\text{-}profile$) and set $isContinuous$$=$$TRUE$. Meanwhile, $sLast$ is updated as the path coordinate of $q$.
         \item The boundaries of the admissible region, then \emph{RT} will set $isContinuous=FALSE$.
       \end{itemize}
       The point $p$ is updated as $q$. If $p$ is the terminal point, then go to \textbf{RT-4}, else go to \textbf{RT-2}.
  \item[\textbf{RT-4.}]
        If $sLast < s_e$, this method fails to generate a feasible trajectory, else output a feasible trajectory $T^*$.
\end{description}

\begin{lemmas}
The subfunction $addProfile$ adds accelerating/decelerating curves to $T^*$. The core of this algorithm is running the \emph{NI} method and detecting whether those generated accelerating and decelerating curves constitute a continuous trajectory on the whole path. The scalar $sLast$ indicates the path length of the trajectory $T^*$. Therefore, after the algorithm is completed, the inequality $sLast < s_e$ in \textbf{RT-4} is a necessary and sufficient failure condition for the \emph{NI} method in presence of velocity and torque constraints.\QEDA
\end{lemmas}

\section{Simulation Results}

In order to verify \emph{Properties \ref{p1}-\ref{t2}}, some simulation results on a unicycle vehicle are provided in this section.

\subsection{Model}
{A unicycle vehicle moves along a specified path, with the direction of the vehicle being always tangent to this given path from a initial pose to a target pose.} The angular velocity $\omega\in \mathbb{R}$, the path velocity $v\in \mathbb{R}$, and the path curvature $\kappa\in \mathbb{R}$ have the following relationship \cite{Mod_car}:
\begin{equation}\label{kwv}
\omega =\kappa v.
\end{equation}
For the specified path, the curvature could be computed as a function of the path coordinate as follows:
\begin{equation}\label{k}
\kappa:s\in[0,s_e]\rightarrow \kappa(s)\in\mathbb{R},
\end{equation}
where the scalar $s$ is the path coordinate.

According to (\ref{kwv}) and (\ref{k}), the kinematic model of the vehicle is described as
\begin{equation} \label{kin}
\bm u = \bm M(s) \dot s,
\end{equation}
where $\bm u = [\omega \ v]^\mathrm{T}$, and $\bm M(s) = [\kappa(s) \ 1]^\mathrm{T}$.
Taking the time derivative of (\ref{kin}) yields that
\begin{equation}\label{kin-1}
\bm {\dot u} = \bm M(s)\ddot s +\bm M_s (s)\dot s^2,
\end{equation}
where $\bm{\dot u} \!=\! [\dot w \ \dot v]^\mathrm{T}$, $\bm M_{s}(s)=[\kappa_s \ 0]^\mathrm{T}$, $\kappa_s=d\kappa/ds$. The scalars $\dot v, \dot w$ are the path acceleration and angular acceleration, respectively.
Velocity and acceleration constraints are given as \begin{align}
-\bm v_{max} \leq \bm u \leq \bm v_{max} \label{cv1}, \\
-\bm a_{max} \leq \bm{\dot u} \leq \bm a_{max}, \label{ca1}
\end{align}
where $\bm v_{max}\in \mathbb{R}^2$ and $\bm a_{max}\in \mathbb{R}^2 $ are constant vectors representing the velocity boundary and the acceleration boundary, respectively. {These vector inequalities (\ref{cv1})-(\ref{ca1}) should be interpreted componentwise.}

In order to guarantee acceleration constraints, substituting (\ref{kin-1}) into (\ref{ca1}) yields that
\begin{equation} \label{ABC}
\bm A(s)\ddot s + \bm B(s)\dot s^2 + \bm C(s) \leq \bm 0,
\end{equation}
where $\bm A(s) = [\bm M(s)^\mathrm{T} \, -\bm M(s)^\mathrm{T}]^\mathrm{T}$, $\bm B(s) = [\bm M_s(s)^\mathrm{T} \ -\bm M_s(s)^\mathrm{T}]^\mathrm{T}$ and $\bm C (s) = -[\bm a^\mathrm{T}_{max} \ \bm a^\mathrm{T}_{max}]^\mathrm{T}$, which are all $4\times 1$ vectors.

In order to guarantee velocity constraints, substituting (\ref{kin}) into (\ref{cv1}) yields that
\begin{equation} \label{AD}
\bm A(s)\dot s + \bm D(s) \leq \bm 0,
\end{equation}
where $\bm A(s) = [\bm M(s)^\mathrm{T} \; -\bm M(s)^\mathrm{T}]^\mathrm{T}$ and $\bm D (s) = -[\bm v^\mathrm{T}_{max} \; \bm v^\mathrm{T}_{max}]^\mathrm{T}$, which are all $4\times 1$ vectors.

The velocity limit curve $MVC(s)$ considering acceleration constraints is computed with (\ref{MVC}) and (\ref{ABC}), and the limit curve $V(s)$ considering velocity constraints is computed with (\ref{V}) and (\ref{AD}). When both acceleration and velocity constraints are taken into account, the maximum velocity curve is obtained as $MVC^*$ by fusing $MVC(s)$ and $V(s)$ with (\ref{MVCSTAR}).

\subsection{Results}

As shown in Fig. \ref{path}, the specified blue path is the following cubic B\'{e}zier curve:
\begin{equation}
\begin{split}
x= & (1\!- \! \lambda)^3x_0+3(1\! - \! \lambda)^2\lambda x_1
    +3(\lambda^2\! -\! \lambda^3)x_2+\lambda^3x_3, \\
y= &(1\! -\! \lambda)^3y_0+3(1\!-\!\lambda)^2\lambda y_1
   +3(\lambda^2\!-\! \lambda^3)y_2+\lambda^3y_3,
\end{split}\nonumber
\end{equation}
where the position of the vehicle is $(x [m],y [m])$, the points $(x_i[m],y_i[m])$, $i\in[0,3]$ are path control points, and the scalar $\lambda \in [0,1]$ is the path parameter. {The $\lambda$ and $s$ obey a nonlinear scaling relation.}
The starting and terminal path velocity $\dot s_{0} = \dot s_e  = 0 [m/s]$. {This cubic B\'{e}zier curve is used to find a path from a initial pose to a target pose so that the orientation of the unicycle vehicle is always tangent to the path, and thus the nonholonomic constraint is satisfied. The following cases show that the \emph{NI} method assigns velocity profiles to the path.}
\begin{figure}[t]
      \centering
      \includegraphics[scale=0.35]{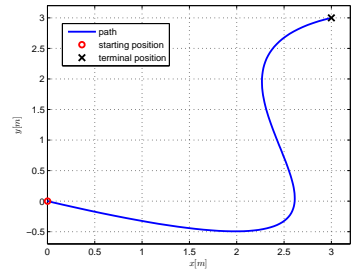}
      \caption{The specified path: cubic B\'{e}zier curve }
      \label{path}
\end{figure}

\emph{Case 1:}
This case shows that, when the velocity constraints are moderate, the \emph{NI} method \cite{NI_Shin} using $MVC^*$ generates the time-optimal trajectory. In this simulation, the velocity and acceleration constraints are set as $\bm v_{max} = [0.5rad/s \ 1.3m/s]^\mathrm{T}, \bm a_{max} = [0.05rad/s^2 \ 0.1m/s^2]^\mathrm{T}$.

In Fig. \ref{v1}, the cyan dash line represents $MVC(s)$ with acceleration constraints. When considering velocity constraints corresponding to the purple dash-dot line $V(s)$ in Fig. \ref{v1}, the maximum velocity curve is altered from $MVC$ to $MVC^*$, which is represented as the boundary between the gray (inadmissible) and blank (admissible) regions. To facilitate subsequent analysis, symbols $\#_1,\#_2$ are used to represent the two closed areas bounded by $MVC(s)$ and $V(s)$. The red and green solid lines are accelerating curves ($\beta\text{-}profiles$) and decelerating curves ($\alpha\text{-}profiles$), respectively, which
comply with \emph{Properties \ref{p1}-\ref{ps}}.
The red $\rhd$ and green $\lhd$ denote the points $sp_{\beta\rightarrow\alpha}$ and $sp_{\alpha\rightarrow\beta}$ respectively (see Section II-B).

As shown in Fig. \ref{v1}, under the curve $MVC^*$, the \emph{NI} method outputs the optimal trajectory as follows. The accelerating curve $\beta_0$, starting from $(0,\dot s_0)$, hits $MVC^*$ at $p_1$. Searching forward along $MVC^*$ from $p_1$, the first $sp_{\alpha\rightarrow\beta}$ found is $p_2$. Starting from $p_2$, the integrated backward decelerating curve $\alpha_1$ intersects $\beta_0$ at $o_1$, and the integrated forward accelerating curve $\beta_1$ hits $MVC^*$ at $p_3$. Then, searching forward along $MVC^*$ from $p_3$, the first $sp_{\alpha\rightarrow\beta}$ found is $p_4$. Starting from $p_4$, the integrated backward decelerating curve $\alpha_2$ intersects $\beta_1$ at $o_2$, and the accelerating curve $\beta_3$ hits the $MVC^*$ at $p_5$. No $sp_{\alpha\rightarrow\beta}$ is found along $MVC^*$ from $p_5$ when $s\leq s_e$. Therefore, the integrated backward $\alpha_e$, starting from $(s_e, \dot s_e)$, intersects $\beta_1$ at $o_3$. The $sp_{\beta\rightarrow\alpha}$ on $\beta_1$ is updated from $o_2$ to $o_3$, which verifies \emph{Property \ref{p3}}. Finally, the \emph{NI} method outputs the feasible and optimal trajectory: $\beta_0-\alpha_1-\beta_1-\alpha_e$.

\begin{figure}[t]
      \centering
      \includegraphics[scale=0.52]{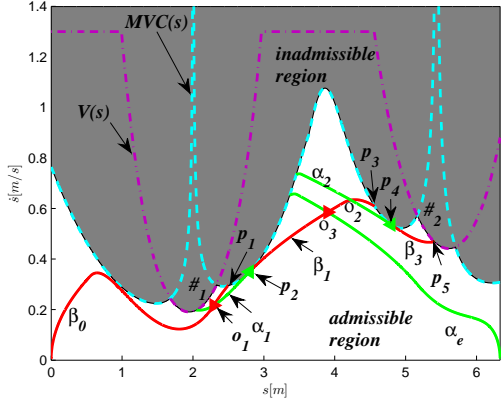}
      \caption{Case 1: The \emph{NI} method outputs the optimal trajectory with constraints $\bm v_{max} = [0.5rad/s \ 1.3m/s]^\mathrm{T}, \bm a_{max} = [0.05rad/s^2 \ 0.1m/s^2]^\mathrm{T}$}
      \label{v1}
\end{figure}

\begin{figure}[t]
      \centering
      \includegraphics[scale=0.52]{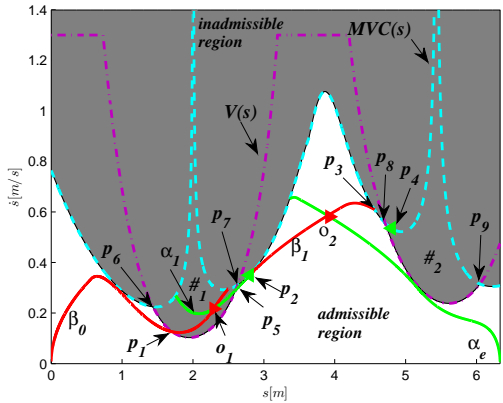}
      \caption{Case 2: The \emph{NI} method fails to output a feasible trajectory with constraints $\bm v_{max} = [0.2rad/s \; 1.3m/s]^\mathrm{T}, \bm a_{max} = [0.05rad/s^2 \; 0.1m/s^2]^\mathrm{T}$}
      \label{v2}
\end{figure}

\begin{figure}
      \centering
      \includegraphics[scale=0.52]{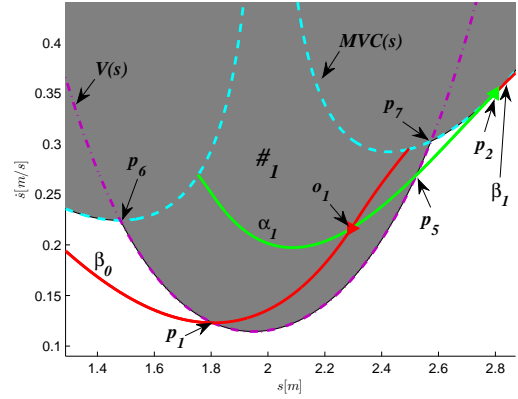}
      \caption{The enlarged view of the region $\#_1$ in Fig. \ref{v2}}
      \label{v2-1}
\end{figure}

\emph{Case 2:}
This case shows that, when the velocity constraints are too restrictive, the conditions in \emph{Property \ref{t2}} will be satisfied, and thus the \emph{NI} method \cite{NI_Shin} using $MVC^*$ fails to output a feasible trajectory. In this simulation, the velocity constraint $\bm v_{max}$ is modified as $[0.2rad/s \; 1.3m/s]^\mathrm{T}$ and the acceleration constraint $\bm a_{max}$ remains the same as that of \emph{Case 1}. Therefore, in Fig. \ref{v2}, the velocity limit curve $V(s)$ due to the velocity constraint becomes lower, and the areas of two inadmissible regions $\#_{1}, \#_2$ increase.
Meanwhile, the trajectory $T:\beta_0-\alpha_1-\beta_1-\alpha_e$, obtained by the \emph{NI} method using $MVC$, is greater than $MVC^*$ at the region $\#_{1}$, and the $sp_{\alpha\rightarrow\beta}$ on $MVC^\dagger$ is nonexistent, which indicates that the conditions of \emph{Property \ref{t2}} hold and also verifies \emph{Property \ref{t1}}.

Under the curve $MVC^*$, the \emph{NI} method fails to output a feasible trajectory, which is described as follows. For clarity, the region $\#_1$ is enlarged as shown in Fig. \ref{v2-1}.
The integrated forward curve $\beta_0$ from $(0,\dot s_0)$ hits the $MVC^*$ at $p_1$. Then, searching forward from $p_1$ along $MVC^*$, the first $sp_{\alpha\rightarrow\beta}$ found is the point $p_2$. However, starting from $p_2$, the integrated backward $\alpha_1$ hits the $MVC^*$ at $p_5$ before intersecting $\beta_0$ (the point $o_1$ is in the inadmissible region). Then, the integrated forward $\beta_1$ from $p_2$ hits the $MVC^*$ at $p_3$. No $sp_{\alpha\rightarrow\beta}$ is found along $MVC^*$ from $p_3$ when $s\leq s_e$.
Therefore, the integrated backward $\alpha_e$, starting from $(s_e, \dot s_e)$, intersects $\beta_1$ at $o_2$.
In all these procedures, the $\alpha\text{-}profiles$ on the right side of the purple dash-dot curve $MVC^\dagger$ (\overarc[0.65]{$p_6p_7$}), cannot intersect $\beta_0$. This $MVC^\dagger$ (\overarc[0.65]{$p_6p_7$}) breaks the intersection between the accelerating and decelerating curves, and causes that the final trajectory is blank between $p_1$ and $p_5$, which indicates that the \emph{NI} method fails and verifies \emph{Property \ref{t2}}.


\section{Conclusion}

This letter revisits the original version of \emph{NI} method for time-optimal trajectory planning along specified paths.
On this basis, we first summarize several known and new properties regarding switch points and accelerating/decelerating curves of the \emph{NI} method, and give corresponding mathematical proofs. Then, we provide concrete failure conditions and rigorous proofs for the property, which indicates that, in the presence of velocity constraints, the original version of \emph{NI} which only considers torque constraints may result in failure of trajectory planning tasks. Accordingly, a failure detection algorithm is given in a `run-and-test' manner. Simulation results on a unicycle vehicle are provided to verify these presented properties.

\addtolength{\textheight}{-0cm}   

\section*{Appendix}

\subsection{{Proof of Property 2}}\label{app-a}

\begin{proof}
In the $AREM$ region, an $\alpha\text{-}profile$ intersects an $\beta\text{-}profile$ at a point $(s = s_c,\dot s = \dot s_c)$. At the neighborhood of $s_c$, the slopes of the $\alpha\text{-}profile$ and $\beta\text{-}profile$ satisfy the inequality $k_\alpha < k_\beta $ and the Lipschitz condition \cite{NI_Bobrow}. Therefore, based on \emph{Comparison Theorem}\cite{Math1} {(Let $y, z$ be solutions of the differential equations $\dot y=F(x,y), \dot z=G(x,z)$. If $F(x,y) < G(x,z),x\in [a,b]$, the function $F$ or $G$ satisfies a Lipschitz condition, and $y(a)=z(a)$, then $y(x) < z(x),x\in (a,b]$)}, it is proven that the $\alpha\text{-}profile$ is greater than the $\beta\text{-}profile$ in the left neighborhood of $s_c$, but less than the $\beta\text{-}profile$ in the right neighborhood of $s_c$.
\end{proof}

\subsection{{Proof of Property 3}}\label{app-b}

\begin{proof}
This property is proven by contradiction.
Assume that an $\alpha\text{-}profile$ is tangent to another $\beta\text{-}profile$ in the $AREM$ region. Then, on the tangent point, the slope $k_{\alpha}$ of the $\alpha\text{-}profile$ is equal to $k_\beta$ of the $\beta\text{-}profile$, which contradicts with the inequality $k_\alpha < k_\beta$ in the $AREM$ region. Thus, the assumption is invalid and the property is proven.
\end{proof}

\subsection{{Proof of Property 4}}\label{app-c}

\begin{proof}
In terms of the number $m$ of intersection points, there are totally two cases: $m=1,  m > 1$.

\emph{Case 1:} $m=1$. There is only one intersection point, so the point $sp_{\beta\rightarrow\alpha}$ on the $\beta\text{-}profile$ is $X_1$. This property holds for $m=1$.

\emph{Case 2:} $m>1$. There are $m$ intersection points as Fig. \ref{con5}.
        In terms of path coordinate, the intersection point $X_i$ is less than $ X_j$, $1 \leq i < j \leq m$.
        Each $X_i$ has one corresponding decelerating curve $\alpha_i$ and one switch point $Y_i$.
        According to \emph{Property \ref{p1}}, $Y_i$ is at the right side of $Y_j$, $i<j$.
        If $X_i$, $i > 1$ is chosen as $sp_{\beta\rightarrow\alpha}$ on $\beta^*$,
        then, starting from $X_i$, the trajectory consisting of $\alpha\text{-}profiles$ and $\beta\text{-}profiles$ cannot leave the region $D$, which is enclosed by $\alpha_1$, $\beta^*$ and $MVC$, across $\alpha_1$ due to \emph{Properties \ref{p1}-\ref{ps}}. Thus, $X_1$ is chosen as $sp_{\beta\rightarrow\alpha}$ on $\beta^*$, which can aid the trajectory to leave the region $D$ along $\alpha_1$ and go on extending to the right side of $Y_1$ with $\beta_1$.
        This property holds for $m>1$. In summary, \emph{Property \ref{p3}} holds.
\end{proof}

\subsection{{Proof of Property 5}}\label{app-d}

\begin{proof}
Due to (\ref{MVCtt}), $MVC^\dagger$ is less than $MVC$. According to the definition of $AR$, the inequality $\alpha(s,\dot s) <  \beta(s,\dot s)$ holds on $MVC^\dagger$. Then, based on the facts $k_\alpha = \alpha(s,\dot s)/ \dot s, k_\beta = \beta(s,\dot s)/ \dot s$, the inequality $k_\alpha < k_\beta$ also holds on $MVC^\dagger$, which violates $k_\alpha = k_{\beta}$ in the definition of tangent switch points (see $sp_{\alpha\rightarrow\beta}$ in Section II-B). Therefore, tangent switch points on $MVC^\dagger$ are nonexistent.
\end{proof}


\end{document}